\newcommand{\RR}{I\!\!R} 
\newcommand{\dec}{D}
\newcommand{\tran}{\mathcal{T}}
\newcommand{\featspace}{\mathcal{X}}
\newcommand{\statespace}{\mathcal{S}}
\newcommand{\acspace}{\mathcal{A}}
\newcommand{\rew}{\mathcal{R}}
\newcommand{\del}{\nabla}
\newcommand{\E}{\mathop{\mathbb{E}}}
\newcommand{\subheading}[1]{\subsection{#1}}
\DeclareMathOperator*{\argmax}{arg\,max}
\title{Extracting Incentives From Black-Box Decisions}
\author{%
  Yonadav Shavit\thanks{These authors contributed equally.} \\
  Harvard SEAS\\
  \texttt{yonadav@g.harvard.edu} \\
   \And
   William S. Moses\footnotemark[1] \\
   MIT \\
   \texttt{wmoses@mit.edu} \\
}
\begin{document}

\maketitle

\begin{abstract}
    An algorithmic decision-maker incentivizes people to act in certain ways to receive better decisions.
    These incentives can dramatically influence subjects' behaviors and lives, and it is important that both decision-makers and decision-recipients have clarity on which actions are incentivized by the chosen model.
    While for linear functions, the changes a subject is incentivized to make may be clear, we prove that for many non-linear functions (e.g. neural networks, random forests), classical methods for interpreting the behavior of models (e.g. input gradients) provide poor advice to individuals on which actions they should take.
    In this work, we propose a mathematical framework for understanding algorithmic incentives as the challenge of solving
    a Markov Decision Process, where the state includes the set of input features, and the reward is a function of the model's output.
    We can then leverage the many toolkits for solving MDPs (e.g. tree-based planning, reinforcement learning) to identify the optimal actions each individual is incentivized to take to improve their decision under a given model.
    We demonstrate the utility of our method by estimating the maximally-incentivized actions in two real-world settings: 
    a recidivism risk predictor we train using ProPublica's COMPAS dataset, and
    an online credit scoring tool published by the Fair Isaac Corporation (FICO).
\end{abstract}
\section{Algorithm-Dictated Incentives}
When decision-makers determine our access to things we want (loans, college admissions, etc.), we inevitably wonder what we can do to make them more likely to decide in our favor. In an increasingly data-driven world, it is common for decision-makers to train machine-learning models to automate the decision-making process. These machine-learning-based decision methods are generally designed to maximize the accuracy of their predictions, without considering how the decision process might affect subjects' behavior.
Yet, such models do implicitly define incentives to those subjected to their decisions, with serious social consequences. 

Consider a variant of an example raised by \citet{eubanks_2018}: families in distress are both more likely to utilize public welfare programs (such as food aid and public housing) and separately more likely to default on their debt. A model trained to predict creditworthiness would, in turn, learn a correlation between utilizing aid and failure to repay. Thus, struggling parents may need to avoid using vital aid for fear they would be denied a loan in their time of need, further exacerbating a family's deprivation.

In this work, we propose a framework for studying the incentives defined by black-box decision functions, including nonlinear functions like neural networks and random forests, given only query access to the function. To the best of our knowledge, our work is the first to propose a definition for the incentives induced by arbitrary black-box decision functions, and to propose a method for determining the actions these decisions incentivize.

First, let us clarify terms. There exists a \textit{decision-maker}, who will at some point in the future make a decision about an individual, referred to as a \textit{decision subject}, based on that individual's \textit{state}. We say that this subject has \textit{agency} if they are capable of taking \textit{actions} to change their state, and thus affect the decision they will ultimately receive. An \textit{advice policy} is a recommendation for which actions to take at each state. By encouraging some actions over others, a decision-maker provides \textit{incentives} to individuals. The \textit{maximally-incentivized action} will, if executed, result in a better final decision than any alternative action.

While identifying an incentivized action from a given decision rule may seem straightforward, it can require surprising nuance. Consider the bimodal decision function in Figure \ref{fig:1di}. Whether the individual is "incentivised" to move left or right depends not only on the decision rule itself, but also how many resources (e.g. time, money) the individual has to expend on future actions.
\begin{figure}
    \centering
\minipage{0.38\textwidth}
\includegraphics[width=\linewidth]{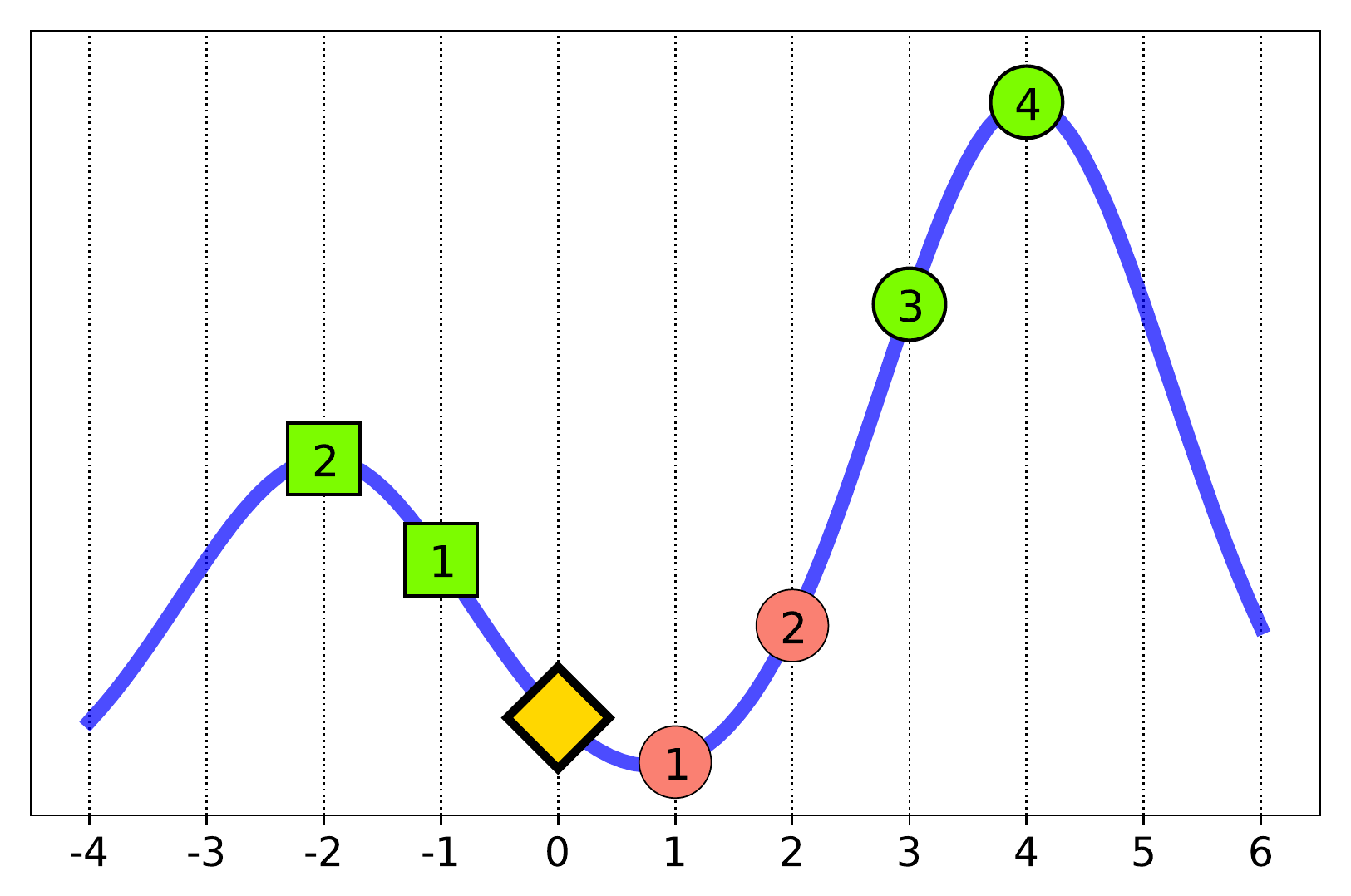}
\endminipage\hfill
\minipage{0.58\textwidth}
    \caption{A subject (gold diamond) wants to maximize the decision received from a decision rule (blue line) by stepping right or left, or staying in place. Each step expends 1 resource (e.g. time, money). The decision is only made when all the resources are used up, or the individual has stopped. If the individual has two resources or fewer, they should head towards the left peak (squares). On the other hand, if they have 3 resources or more, they should move right, because even though their interim state may seem worse, their final decision will be better (circles).}
\endminipage

    \label{fig:1di}
\end{figure}

\subheading{Our Contributions}

As our primary contribution, in Section \ref{sec:framework} we propose a framework for both evaluating the quality of an advice policy on a decision model, and for generating high-quality advice policies from decision models. Our method only requires black-box query access to the models, and can thus be utilized both by decision-makers wishing to evaluate the incentives of their models, or by auditors looking for incentivized behaviors but posessing only API-access to the true decision-making pipeline.

The primary insight of this work is that a subject's agency, their process of changing to improve the outcome they receive from a decision-maker, can be understood as a Markov decision process (MDP). In this "agency MDP", the state includes the input features to the decision, the actions represent the choices available to the individual, and the reward is the decision when the MDP terminates and 0 otherwise. This also allows us to model the case where a decision-subject can take a sequence of actions, and the precise sequencing of actions those actions matters. For example, "studying for an exam" may have a different effect before vs. after "getting a full night's sleep". We can compute a decision model's maximally-incentivized action as the current action suggested by an optimal policy of this agency MDP.

In Section \ref{sec:linear}, we prove that local linear approximations \cite{lundberg2016unexpected}, a popular alternative interpretability method, fail to recover the maximally-incentivized actions for a wide class of decision functions.

Finally, in Section \ref{sec:experiments}, we demonstrate the utility of our framework in computing maximally-incentivized actions in two real-world decision settings: a recidivism predictor based on ProPublica's COMPAS dataset \cite{angwin_larson_kirchner_mattu_2019}, and an online credit-scoring tool published by FICO \cite{myfico}. 
We approximate these decision models' maximally-incentivized actions by estimating the optimal policy for their agency-MDPs using MCTS \cite{browne2012survey}. 
We demonstrate that our approach outperforms several baseline advice policies, and investigate the incentives provided by these decision models, including surprising findings about the impact of model choice on agency across demographic groups.

\subheading{When Should We Study Incentives?}

First and foremost, in some sectors, as in credit scoring \cite{usc_2016}, the law itself mandates that the public has a right to know how to improve their received decisions. US law requires credit raters to provide "adverse action notices" 
which must include up to $4$ "key factors that adversely affected the credit score of the consumer"
\cite{usc_2016}. Arguably, key factors are those factors that would yield the greatest improvement (in the short or long term) to an individual's score, i.e. the best advice to an individual. 

Even when not legally mandated, there is inherent value, to both an individual/customer and to society, in understanding the incentives dictated by decision systems.
Identifying incentivized actions can be understood as a type of algorithmic interpretability \cite{doshi2017towards, lipton2016mythos}, where the objective is to shed light on the effect that the choice of model will have on individual behavior in the decision-receiving population.
Separately, analyzing incentives can help determine whether a particular model provides its subjects sufficient agency - or if it provides agency unequally across different groups \cite{milli2019social, hu2019disparate}.

While decision recipients clearly value the agency that comes from knowing their incentivized actions, one might reasonably ask whether an algorithm owner would ever want to reveal to subjects the best way to game the decision function without being required. After all, \citet{hardt2016strategic} rightfully consider that agents following their incentives may shift the distribution of subjects such that the same features no longer predict the same outcome, reducing the decision model's accuracy. In this case, a model owner might reasonably want to retrain their model -- potentially rendering pointless any incentive-motivated actions. 
There are several cases where exposing incentivized actions remains valuable. First, if the decision model is causal or has already been trained on incentive-following individuals, it may be robust to individual gaming. It is also the case that individuals may glean incentives by themselves -- whether through leaks, or by observing numerous feature/decision pairs. This means that subjects will discover and act on the model's implicit incentives, and bear the corresponding social costs, whether or not the practitioner has considered them.

\section{Related Work}
\label{sec:relatedwork}
Computer science has only recently begun to reckon with the agency of individuals subject to complex automated decision rules, and the incentives they experience. One can understand the problem of identifying incentives as a sub-problem of algorithmic interpretability \cite{guidotti2019survey, doshi2017towards}, though interpretability is a loaded term \cite{lipton2016mythos}, and as we will show in Section \ref{sec:linear} many methods from the literature fail to accurately recover an algorithm's incentives.

Incentives have been studied for a long time in the economics literature, particularly around the misalignment of incentives between different parties \cite{kerr1975folly, grossman1992analysis}. Modeling individuals as subject to MDPs has long been practice in economics \cite{rust1994structural}. For example, \citet{keane1997career} construct a $5$-action MDP to study the career choices of young men, validating their model on real-world data. However, in this literature, agents are subject to reward rules far simpler than today's algorithmic decision functions.

\citet{hardt2016strategic} consider agency through the lens of individuals ``gaming'' the classifier, thus reducing its accuracy. They explore means for reducing the ability of agents to game a classifier, though doing so may unfairly reduce the agency of previously-disadvantaged groups (\citet{milli2019social, hu2019disparate}). Recent work has suggested that this ``effort-unfairness'' across groups may appear even in non-strategic classifiers \cite{heidari2019long}.
On the opposite end, \citet{kleinberg2018classifiers} study the problem of designing a decision rule solely for the purpose of imposing a particular set of incentives on an individual, given a linear model of that individual's available actions.

Recent work has begun to wrestle with the question of extracting incentives from existing decision models that were not trained to consider the behavior of their subjects. \citet{wachter2017counterfactual} propose `explaining' incentives to individuals by identifying (e.g. via MILPs \cite{russell2019efficient} or SMT-solvers \cite{karimi2019model}) a `similar' hypothetical individual who would've received a better decision. This line of work implicitly assumes that an informative counterfactual is ``close'' in feature space to the original, but does not ask what sequence of actions would allow the individual to attain the counterfactual -- or whether such a sequence even exists. \citet{ustun2019actionable} first proposed the question most directly related to ours, of finding actionable ``recourse'' for individuals subject to the decisions of a linear model. Their approach is applicable to categorical/linear transition models and white-box linear decision functions (including linear and logistic regressions), and provides optimal incentives within these settings.

Two independent concurrent works have explored the challenge of identifying multi-step action sequences for differentiable decision functions like neural networks. Much like this work, \citet{Ramakrishnan2019synthesizing} model the problem as choosing a sequence of actions to maximize a differentiable decision function, by exhaustively searching the space of acceptable discrete action sequences, and for each sequence adjusting each action by locally solving a differentiable optimization problem. \citet{Joshi2019towards} learn a latent embedding of the data manifold, and then assume that an individual acts by perturbing their features in this latent space (thus ensuring that the result of each ``step'' yields a feature vector near the original data distribution). They then find an action sequence by taking a series of gradient steps in latent space to approximately maximize the combined embedding and decision models.

To our knowledge, ours is the first work to propose a means of identifying incentivized action sequences for arbitrary black-box (possibly non-differentiable) models, and to have identified the connection between extracting recourse and reinforcement learning.
\section{Framework}
\label{sec:framework}
At a high level, we propose that to properly understand how an individual is incentivized to act, we must first define the actions available to an individual, and their effects. Then, the individual is incentivized to take whichever action will modify their current state such that, after executing a sequence of additional actions, they will reach a final state that maximizes their received decision.

Consider an individual with state $s$ drawn from a universe of individuals $\statespace$. Let $I_x(s) = x$ return the decision features $x \in \featspace$ of the individual, and let $\dec: \featspace \rightarrow \RR^+$ be a decision function that maps the individual's features to a positive real-valued decision.

While there may be many different ways that an individual may wish to change their received decision, we restrict our attention to the simple case where the individual wants to maximize the value of the decision $\dec(x)$ they receive. We can enumerate a set of actions $a \in \acspace$ that the individual can execute in order to change their state $s$, which may consequently affect their received decision.
We specify a transition model $\mathcal{T}: \statespace \times \acspace \rightarrow \Delta(\statespace)$ that describes the individual's new state $s' \sim \tran(s, a)$ as a draw from the distribution of possible consequences of taking action $a$ at state $s$. One of the benefits of this framing is that the actions can be defined as the intuitive choices an individual can make, and each action can have complex, state-dependent effects. 

As an example, let's consider the case of an applicant seeking to maximize their credit score before applying for a home loan: $s$ would be the overall state of the individual, including how much time they had left before they wanted to buy a home; $x$ would be the subset of their features used in a credit model; $\dec$ would be the credit scoring function; $a$ could be the action of spending a month paying down the debt on a credit card; $\tran(s, a)$ would be the impact of that payment on the individual's debt-to-income ratio and credit utilization, along with the decrease in time remaining before the individual plans to apply for the loan.

We define an ``agency MDP'' as the Markov decision process where the state is the individual's overall state $s$, the actions available are $a \in \acspace$, and the transition model is $\tran(s, a)$. We specify a terminal function $\text{end}(s)$ that determines whether the sequence has ended, and define the reward function $\rew$:
\begin{equation}
    \rew(s) = 
    \begin{cases}
    \dec(I_x(s)) & \text{if}~ \text{end}(s) \\
    0 & \text{otherwise}
    \end{cases}
\end{equation}

Given a deterministic advice policy $\pi$ from the set of policies $\Pi: \statespace \rightarrow \acspace$ where $\pi(s)$ maps state $s$ to advised action $a$, let $H_\pi(s)$ be the distribution of end-states resulting from "rolling out" $\pi$ starting at state $s$ via the following procedure: starting at $s_0=s$, take action $a_0 = \pi(s_0)$ to get a new state $s_1 \sim \tran(s_0, a)$, then execute a new action $a_1=\pi(s_1)$ to get a new state $s_2$, and repeat, until arriving at a terminal state $s_f$ such that $\text{end}(s_f)=\text{True}$. We often take $\text{end}(s)$ as asserting whether the available \text{resources} (time, money, etc) necessary to act have run out. Attributing such costs to actions is important: we want to evaluate the utility of an action while enforcing realistic constraints on the decision-recipient (e.g. a student preparing for an exam has only a fixed amount of time to study).

We say that an individual with state $s$ is $\textit{incentivized}$ to execute action $a^*$ if that action will \textbf{maximally} improve their eventual expected decision, more than any alternative action. More precisely:
\begin{equation}
\label{eq:incentivequality}
    a^* = \argmax_{a \in A} \left ( \max_{\pi \in \Pi} \E_{s_f \sim H_\pi(s)} \left [\dec\left(I_x(s_f)\right)\right ] \right ) = \pi^*(s)
\end{equation}
where $\pi^*$ is a policy maximizing the non-discounted expected reward on the agency MDP. We can approximate this optimal advice policy by leveraging the countless approaches in the reinforcement learning and planning literature dedicated to the task of computing optimal policies for MDPs \cite{kaelbling1996reinforcement, kolobov2012planning}.
We can also compare the effectiveness of different advice policies by comparing the expected decision resulting from following each advice policy: $\E_{s_f \sim H_\pi(s)} \left [\dec\left(I_x(s_f)\right)\right ]$.

Note that we have defined the maximally-incentivized action as the action maximizing the resulting decision, conditioned on the fact that the subject will, after following the incentive, continue to follow an optimal action policy (i.e. not changing their mind). Alternative models of individual behavior can largely be incorporated into this same framework by modifying the state and transition model - for example, by letting $s$ contain "commitment to the policy", having certain actions decrease that quantity, and reverting to a fixed alternative policy if commitment ever drops below a threshold.

A major principle of our framework is that one cannot determine which behaviors are incentivized by a decision-model without explicitly considering the actions available to the decision-receiver. For example, advising an individual to "open more credit cards" to improve their credit score may backfire and reduce their score if having more cards increases their likelihood of taking on unmanageable debt. Choosing the right transition model can be tricky. While there does exist a true transition model (reality), any approximate model chosen will significantly impact the incentives any method will discover. That said, as previously mentioned, transition models validated on real-world data have been used extensively in the economics literature \cite{rust1994structural}. We illustrate the impact of different transition models on the incentives generated in a real credit scoring model in Section \ref{sec:experiments}.

\section{Problems with Local Approximations as Advice Policies}
\label{sec:linear}

One might reasonably hope that, instead of needing to bother with solving an agency MDP, a decision model's local structure can serve as a good heuristic for the incentives dictated by that model. After all, local approximation methods \cite{ribeiro2016should, baehrens2010explain} have gained widespread adoption as one of the most effective methods for interpreting otherwise-inscrutable machine learning models \cite{lundberg2016unexpected}. 

Unfortunately, even for simple non-linear decision functions, local-approximation-based advice can be dangerously wrong. We have already seen the example in Figure \ref{fig:1di}, in which a locally-improving policy would trap the individual at a local maximum and never achieve the better outcome that was available to them. Following local advice may result in individuals receiving sub-optimal decisions even when the decision function is monotonic: in Figure \ref{fig:2di} we can see that a locally-optimizing policy would lead a subject to a substantially inferior decision over the course of $6$ steps, even when each locally-optimal action seemed to improve the decision.

Below, we'll lay out a definition for locally-optimal advice, and prove that it only recovers the maximally-incentivized actions for a narrow class of decision functions.

\begin{figure}
    \centering
\minipage{0.3\textwidth}
    \includegraphics[width=\linewidth]{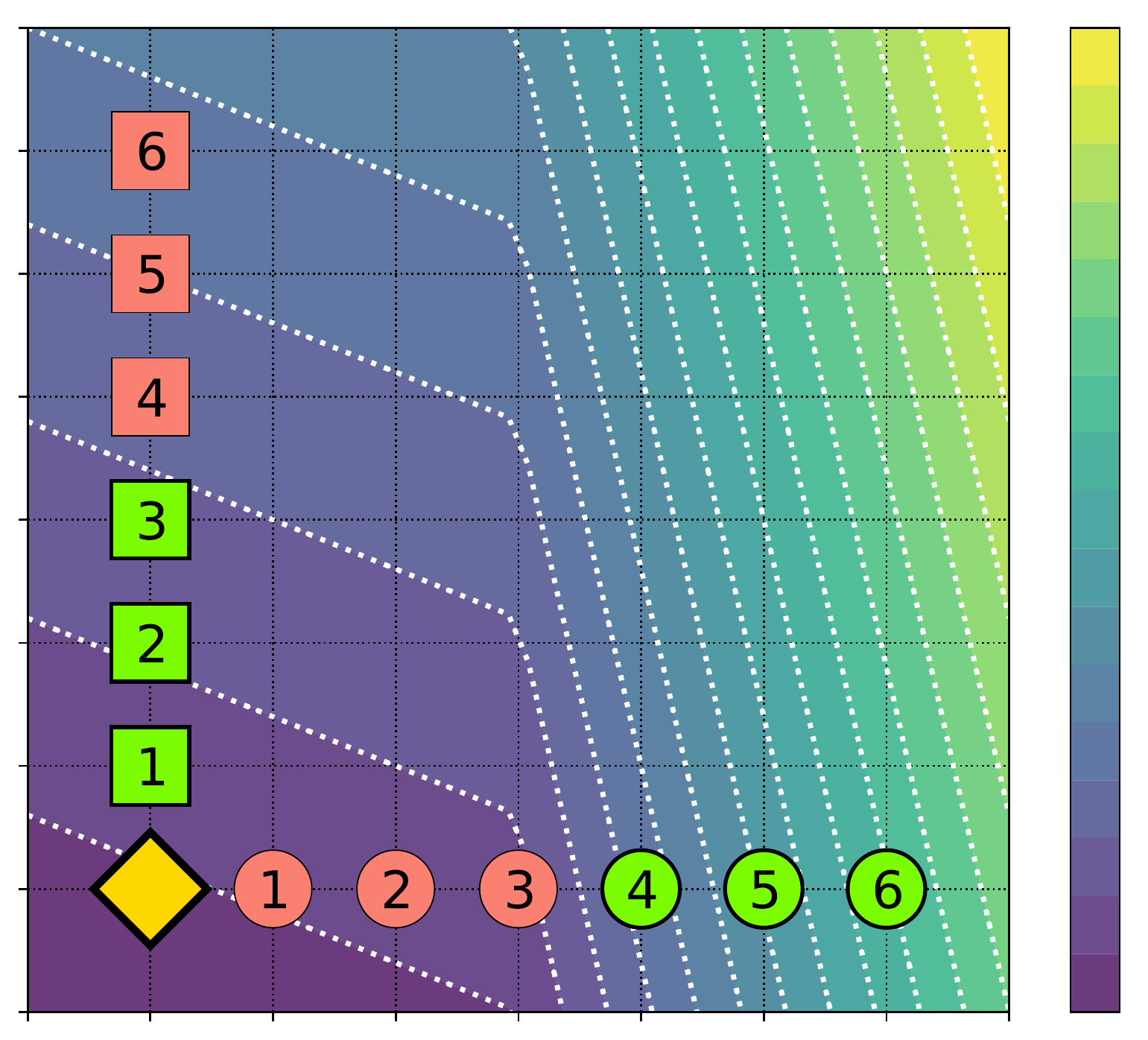}
\endminipage\hfill
\minipage{0.68\textwidth}
    \caption{A 2D monotonic decision function, with the lowest output in bottom-left (blue), and highest output in top-right (yellow). A subject in the bottom-left corner (gold diamond) can move $1$ grid unit each step. Circles represent the optimal agency-MDP policy given $4$ or more steps, while squares represent the greedy policy.}
\endminipage
    \label{fig:2di}
\end{figure}

\subheading{When local approximations work, and when they fail}

In our setting, we define a locally-optimal \textit{greedy} policy as a policy that chooses actions based only on maximizing the immediate improvement in the received decision:
\begin{equation}
\label{eq:greedy}
\pi_{local}(s) = \argmax_{a \in \acspace} \E_{s' \sim \tran(s,a)} \left [\dec\left(I_x(s')\right)\right ]
\end{equation}
When the function is differentiable, we can also define the \textit{gradient-following} policy for a state $s = [x, r]$ composed of only the decision features $x$ and the remaining available displacement $r$. Note that in this hypothetical transition model, we can manipulate every feature axis in $x$ independently.
\begin{equation}
\label{eq:gradient}
\pi_{gradient}(s) = \argmax_{a \in\{a' \in \RR^{dim(x)}~| ~\|a'\|_2 = \epsilon, ~\epsilon \ll 1\} } \dec(x + a) ~~= ~~ \epsilon \nabla_x\dec\left(x\right)
\end{equation}

The following are a series of theorems about this setting; the proofs are included in Appendix \ref{sec:proofs}.
\newtheorem{theorem}{Theorem}
\begin{theorem}
\label{gradIsOptimal}
If there exists a policy that is optimal independent of the number of resources remaining, it must be equivalent to the greedy policy.
\end{theorem}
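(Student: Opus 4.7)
The plan is to reduce the theorem to the behavior of the policy in the very last step of the episode, where optimality already coincides with greediness by definition. I would first pin down the assumption: a state carries some resource counter $r$ (the number of actions available before $\text{end}$ fires), and "optimal independent of the number of resources remaining" means there exists a policy $\pi^*$ with $\pi^*(s, r) = \pi^*(s)$ for all $r$ which maximizes the expected terminal reward from every admissible initial $(s, r)$. The goal is to show $\pi^*(s) \in \argmax_{a \in \acspace} \E_{s' \sim \tran(s, a)}[\dec(I_x(s'))]$ for every $s$ in the policy's domain, i.e., $\pi^*$ agrees with $\pi_{local}$ from Eq.~(3).

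The key step is to instantiate the optimality hypothesis at states where only one action remains before termination (i.e., $r = 1$). From such a state the rollout draws a single $s' \sim \tran(s, \pi^*(s))$, $\text{end}$ fires, and the reward is $\dec(I_x(s'))$. So the expected return from $(s, 1)$ is exactly $\E_{s' \sim \tran(s, \pi^*(s))}[\dec(I_x(s'))]$, and optimality at $(s, 1)$ forces $\pi^*(s) \in \argmax_{a} \E_{s' \sim \tran(s, a)}[\dec(I_x(s'))]$, which is the definition of $\pi_{local}(s)$. Resource independence then extends this conclusion to every $r$: $\pi^*(s) = \pi^*(s, 1) = \pi_{local}(s)$, giving the claimed equivalence. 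I would frame this as a contrapositive for clarity: if $\pi^*$ ever disagrees with $\pi_{local}$ at some state $s$, then starting at $(s, 1)$ it is strictly suboptimal, contradicting the hypothesis.

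The main obstacle is not technical but interpretive, since the content of the proof collapses to a single last-step observation once the definitions are pinned down. Two subtleties must be flagged. First, the $\argmax$ in Eq.~(3) may be set-valued, so "equivalent" should be read as "selects an action attaining the greedy maximum", and ties must be handled when one speaks of uniqueness. Second, one must confirm that states with exactly one remaining action are in the policy's domain; this is immediate whenever the admissible initial resource levels include $r = 1$, or more generally whenever every queried state can be reached one action away from termination. No Bellman expansion or backward induction is needed, which is precisely what makes the theorem a sharp negative result: any optimal policy that ever deviates from the greedy rule must be exploiting the resource counter, motivating the full MDP-based framework of Section~\ref{sec:framework}.
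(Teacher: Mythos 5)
Your proposal is correct and takes essentially the same route as the paper's (very terse) proof: both arguments instantiate the resource-independent optimal policy at the minimal resource level, where optimality coincides with greediness by definition, and then use resource-independence to extend the conclusion to all states. Your version is simply a more careful elaboration, usefully flagging the set-valued $\argmax$ and the domain/reachability caveat that the paper leaves implicit.
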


\newtheorem{corollary}{Corollary}[theorem]
\begin{corollary}
\label{localAreGlobal}
Only if a decision function $\dec$ satisfies the constraints required for greedy ascent to reach a global maximum from any point can the gradient-following policy be an optimal advice policy for $\dec$. This means all local maxima must also be global maxima for a decision function to have an optimial policy independent of resources.

\end{corollary}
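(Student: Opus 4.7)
The plan is to read the corollary as a direct specialization of Theorem~1, followed by a short topological argument about where greedy ascent gets stuck. First I would observe that $\pi_{gradient}$ is simply $\pi_{local}$ in the specific action space consisting of vectors of norm $\epsilon \ll 1$: to first order in $\epsilon$, $\dec(x+a) \approx \dec(x) + a^\top \nabla_x \dec(x)$, so the norm-constrained maximizer of the immediate one-step reward in Equation~\ref{eq:greedy} is exactly $\epsilon\,\nabla_x \dec(x)$ (up to normalization), which is $\pi_{gradient}$. So once we are in the differentiable-decision, continuous-action setting, the gradient-following policy \emph{is} the greedy policy.

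Second, I would invoke Theorem~\ref{gradIsOptimal}. Suppose $\pi_{gradient}$ is an optimal advice policy for $\dec$. Because $\pi_{gradient}$ does not depend on the remaining-resource component of the state, it is by definition resource-independent, and so Theorem~\ref{gradIsOptimal} applies: this optimal resource-independent policy must coincide with greedy ascent. For greedy ascent to yield the maximally-incentivized action at \emph{every} starting state $s$ and every resource budget $r$, the rollout $H_{\pi_{gradient}}(s)$ must, given sufficient resources, reach a state whose decision value equals $\max_{x\in\featspace} \dec(x)$; otherwise we can pick a counterexample state and sufficient resource budget on which some non-greedy policy strictly outperforms $\pi_{gradient}$, contradicting its assumed optimality.

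Third, I would use the local structure of $\pi_{gradient}$ to pin down the required constraint on $\dec$. At any local maximum $x^*$, we have $\nabla_x \dec(x^*) = 0$, so $\pi_{gradient}(x^*)$ prescribes no motion and the rollout is trapped at $x^*$ for any number of remaining resources. Therefore, if $x^*$ is a local but not global maximum, the individual starting at $s=[x^*,r]$ receives $\dec(x^*) < \max_{x} \dec(x)$, while an alternative policy that first descends out of the basin and then ascends to a taller peak does strictly better once $r$ is large enough. This contradicts optimality of $\pi_{gradient}$ at that state, so every local maximum of $\dec$ must be a global maximum — precisely the constraint on $\dec$ under which greedy ascent from any starting point reaches a global maximum.

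The hard part, and the only place some care is needed, is making precise what ``reaching a global maximum'' means for the fixed-step-size discrete gradient iteration in Equation~\ref{eq:gradient}: near a flat peak the update can overshoot and oscillate, and saddle points or ridges can look like obstructions. I would address this by noting that the argument needs only a single starting state on which $\pi_{gradient}$ is suboptimal, and initializing exactly at a non-global local maximum $x^*$ suffices since $\nabla_x\dec(x^*)=0$ makes the rollout stationary regardless of $\epsilon$; saddles and ridges are not needed for the obstruction and do not need to be ruled out in the characterization.
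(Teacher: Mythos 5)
Your proposal is correct and matches the paper's intent: the paper states this corollary without any written proof, treating it as an immediate consequence of Theorem~\ref{gradIsOptimal}, and your argument (gradient-following is the greedy policy in the continuous $L_2$ setting, is resource-independent, and would be strictly beaten for large resource budgets by any policy escaping a non-global local maximum where $\nabla_x \dec = 0$ traps it) is exactly the natural instantiation of that reasoning. Your closing observation that a single stationary counterexample state suffices, so overshoot/saddle pathologies need not be ruled out, is a sensible tightening of what the paper leaves implicit.
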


\newtheorem{theorem2}{Theorem}
\begin{theorem}
\label{Linear}

For a continuous action space in $L_2$, a greedy policy is optimal if the gradient field of $D$ consists of straight lines wherever $\del \dec \ne 0$.

\end{theorem}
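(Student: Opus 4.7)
The plan is to show that, under the straight-line hypothesis on $\nabla \dec$, the continuous-action greedy policy traces the straight ray from $x_0$ to the optimal reachable point, and that $\dec$ is monotone along this ray, so no alternative advice policy can do strictly better.

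First, I would model the greedy policy in the $\epsilon \to 0$ limit as the gradient-flow curve $\gamma$ with $\gamma(0) = x_0$ and $\gamma'(t) \propto \nabla \dec(\gamma(t))$. Since each greedy action has $L_2$ cost $\epsilon$, a resource budget of $R$ yields total arclength $R$ along an integral curve of $\nabla \dec$. The hypothesis forces $\gamma$ to be a straight ray through $x_0$ in the direction $\nabla \dec(x_0)$, and $\tfrac{d}{dt}\dec(\gamma(t)) \propto \|\nabla \dec(\gamma(t))\|^2 \geq 0$ gives monotonicity of $\dec$ along it.

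Next, I would characterize the optimum. Any endpoint $y$ reachable within an $L_2$ budget $R$ satisfies $\|y - x_0\|_2 \leq R$ by the triangle inequality on summed $L_2$ step costs, so an optimal endpoint is $x^* \in \argmax_{\|y - x_0\|_2 \leq R} \dec(y)$. Assuming $\nabla \dec(x^*) \neq 0$, the first-order KKT condition on the closed ball gives $\nabla \dec(x^*) = \lambda(x^* - x_0)$ with $\lambda > 0$; the integral curve through $x^*$ is then a straight line in the direction $(x^* - x_0)$ and hence passes through $x_0$. By uniqueness of integral curves of $\nabla \dec$ (under local Lipschitz regularity), this curve coincides with $\gamma$, so $\gamma$ reaches $x^*$ at arclength $\|x^* - x_0\|_2 \leq R$, within budget. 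Monotonicity then yields $\dec(\gamma(R)) \geq \dec(x^*)$, and by definition of $x^*$ this must be equality.

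The main obstacle I anticipate is the critical-point case: $\gamma$ may hit a point with $\nabla \dec = 0$ before exhausting the budget, and the KKT/uniqueness argument above breaks down there. To close this gap I would invoke Corollary \ref{localAreGlobal}, under which any critical point the flow reaches is already globally optimal, so halting loses nothing. A secondary but routine concern is a clean $\epsilon \to 0$ passage from the discrete greedy policy to the continuous gradient flow, handled by a uniform-convergence argument on the compact reachable set using continuity of $\dec$ and of $\nabla \dec / \|\nabla \dec\|$ away from critical points.
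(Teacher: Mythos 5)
You are proving the opposite implication from the one the paper actually establishes. Although the main-text statement of Theorem \ref{Linear} reads ``optimal \emph{if},'' the appendix restates it as ``optimal \emph{only if}'' and the paper's proof is a necessity argument: it shows that an optimal policy must travel in a straight line from $x_0$ to the nearest global maximum (or to the best point on the boundary of the reachable ball), and concludes that the greedy/gradient-following policy can be optimal only when the gradient field already consists of straight lines. Your proposal instead argues sufficiency --- that a straight-line gradient field makes greedy optimal --- which is the statement of the main-text version but not what the paper proves; the sufficiency-flavored claims the paper does make are Corollary \ref{thm:samedirection}'s ``parallel gradients'' setting.

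More importantly, your sufficiency argument has a genuine gap at exactly the point you flag: the critical-point case. You propose to close it by invoking Corollary \ref{localAreGlobal}, but that corollary states a \emph{necessary} condition for the greedy policy to be optimal; it is not a consequence of the hypothesis that the gradient field consists of straight lines, so assuming it while trying to prove greedy is optimal is circular. The gap is not cosmetic: take $\dec(x) = h(\|x\|_2)$ with $h(r) = \sin r + r/100$. Its gradient field is radial, hence straight wherever $\del \dec \ne 0$, yet the greedy flow started at $\|x_0\|_2 = 0.1$ stalls at the first critical sphere near $r = \pi/2$, while strictly better points near $r = \pi/2 + 2\pi k$ lie within a modest budget $R$ along the very same ray. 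The same example breaks your KKT/uniqueness step: the segment from $x_0$ to $x^*$ is interrupted by critical points, so the integral curve through $x^*$ need not coincide with the one through $x_0$. As literally stated, the ``if'' direction is therefore false without an additional hypothesis (e.g., that greedy ascent reaches a global maximum from every starting point, or that $\del \dec \ne 0$ everywhere as in Corollary \ref{thm:samedirection}), which is presumably why the appendix proves ``only if.''
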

\newtheorem*{remark}{Remark}

\begin{remark}
    This explains why gradient-following is not optimal in Figure \ref{fig:2di}, in spite of the function being monotonic and otherwise amenable to gradient ascent.
\end{remark}

\newtheorem{corollary2}{Corollary}[theorem2]
\begin{corollary}
\label{thm:samedirection}
For a continuous action space in $L_2$, a greedy policy is optimal only if the gradient field of $\dec$ consists of straight lines wherever $\del \dec \ne 0$.
\end{corollary}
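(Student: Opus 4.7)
I will prove the corollary by contraposition: assuming the gradient field of $\dec$ fails to consist of straight lines at some point where $\nabla \dec \ne 0$, I will exhibit a starting state and a resource budget on which the greedy policy is beaten by another policy, thereby showing that optimality of greedy forces the gradient-field condition. The core intuition is that, in a continuous $L_2$ action space, the greedy policy from \eqref{eq:greedy} acts infinitesimally as $\pi_{gradient}$ from \eqref{eq:gradient}, so its rollout traces the integral curves of the normalized gradient flow $\dot\gamma = \nabla\dec/\|\nabla\dec\|_2$. Whenever such an integral curve is not a straight line, its arc length strictly exceeds the Euclidean chord to its endpoint, and this slack of budget can be spent to strictly increase $\dec$.

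\textbf{Setup.} Pick $x_0$ with $\nabla \dec(x_0) \ne 0$ at which the integral curve of the normalized gradient field is not locally a straight line. Let $\gamma(t)$ be the unit-speed greedy trajectory from $x_0$, so $\|\dot\gamma(t)\|_2=1$ and $\gamma(0)=x_0$. By continuity of $\nabla \dec$, there exists $T>0$ such that on $[0,T]$ the curve never reaches a critical point of $\dec$ and is not a straight segment. In particular $\nabla \dec(\gamma(T)) \ne 0$ and, because a non-straight rectifiable arc is strictly longer than its chord, $\|\gamma(T)-x_0\|_2 < T$. Let $x_T := \gamma(T)$ and set $\delta := T - \|x_T-x_0\|_2 > 0$.

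\textbf{The alternative policy.} Consider the policy that (i) moves in a straight line from $x_0$ to $x_T$, consuming budget $\|x_T-x_0\|_2$, and then (ii) follows the greedy/gradient policy from $x_T$ with the remaining budget $\delta$. Since $\nabla\dec(x_T)\ne 0$, the directional derivative of $\dec$ along $\nabla\dec(x_T)/\|\nabla\dec(x_T)\|_2$ is strictly positive, so by the fundamental theorem of calculus the value of $\dec$ after the second phase is
\begin{equation*}
\dec(x_T) + \int_0^\delta \|\nabla \dec(\gamma_2(s))\|_2\,ds \;>\; \dec(x_T),
\end{equation*}
where $\gamma_2$ is the gradient-flow continuation. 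But $\dec(x_T)$ is exactly the value achieved by the greedy policy with budget $T$ starting at $x_0$. Hence greedy is not optimal at $(x_0,T)$, contradicting the hypothesis and completing the contrapositive.

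\textbf{Main obstacle.} The delicate step is the one implicitly invoked when I identify the continuous greedy policy with the unit-speed gradient flow and assert that its trajectory, rather than any limit of discretized greedy choices, is the object being compared. This requires taking $\epsilon \to 0$ in \eqref{eq:greedy}, \eqref{eq:gradient} consistently with the resource-consumption model used to define ``budget $T$'' in $L_2$, and invoking enough regularity on $\dec$ (e.g. $C^1$ with $\nabla\dec$ locally Lipschitz off the critical set) to make the gradient flow well-defined and to guarantee that the strict chord-vs-arc inequality persists for a positive-measure set of budgets. Once that identification is justified, the rest of the argument is routine: the strict inequality $\|x_T-x_0\|_2 < T$ and the non-vanishing gradient at $x_T$ translate directly into a strict improvement over the greedy endpoint.
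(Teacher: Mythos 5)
Your argument is correct (at the paper's level of rigor) but proceeds by a genuinely different route than the paper's proof. The paper argues forward by characterizing the optimal trajectory itself: it splits into cases on whether $\dec$ attains a global maximum, shows that with budget equal to the distance to the nearest global maximizer (or, in the no-maximum case, invoking Corollary \ref{localAreGlobal} and the argmax over the reachable ball, which must lie on its boundary) the optimal policy must travel in a straight line, and then concludes that gradient-following can only be optimal if its integral curves are those straight lines. You instead argue by contraposition with an exchange/shortcut construction: a bent greedy trajectory has arc length strictly exceeding its chord, so a competitor can reach the same endpoint with budget to spare and spend the surplus $\delta$ ascending further from $x_T$ (where $\del\dec \ne 0$), strictly beating greedy. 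Your route avoids the case analysis, the appeal to Corollary \ref{localAreGlobal}, and any reasoning about global maxima, and it explicitly exhibits the witnessing better policy; the paper's route, in exchange, yields the stronger structural fact that the optimal trajectory is a straight line to the best reachable point, which it reuses in the subsequent corollary about parallel gradient fields. Both proofs share the same unstated foundations — identifying the continuous-limit greedy policy with unit-speed gradient flow and assuming enough regularity for that flow to exist — which you flag more candidly than the paper does.
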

\begin{remark}
    By Corollary \ref{thm:samedirection}, the gradient-following policy is optimal for linear and logistic regression in $L_2$.
\end{remark}

\begin{figure}
    \centering
\minipage{0.32\textwidth}
    \includegraphics[width=\linewidth]{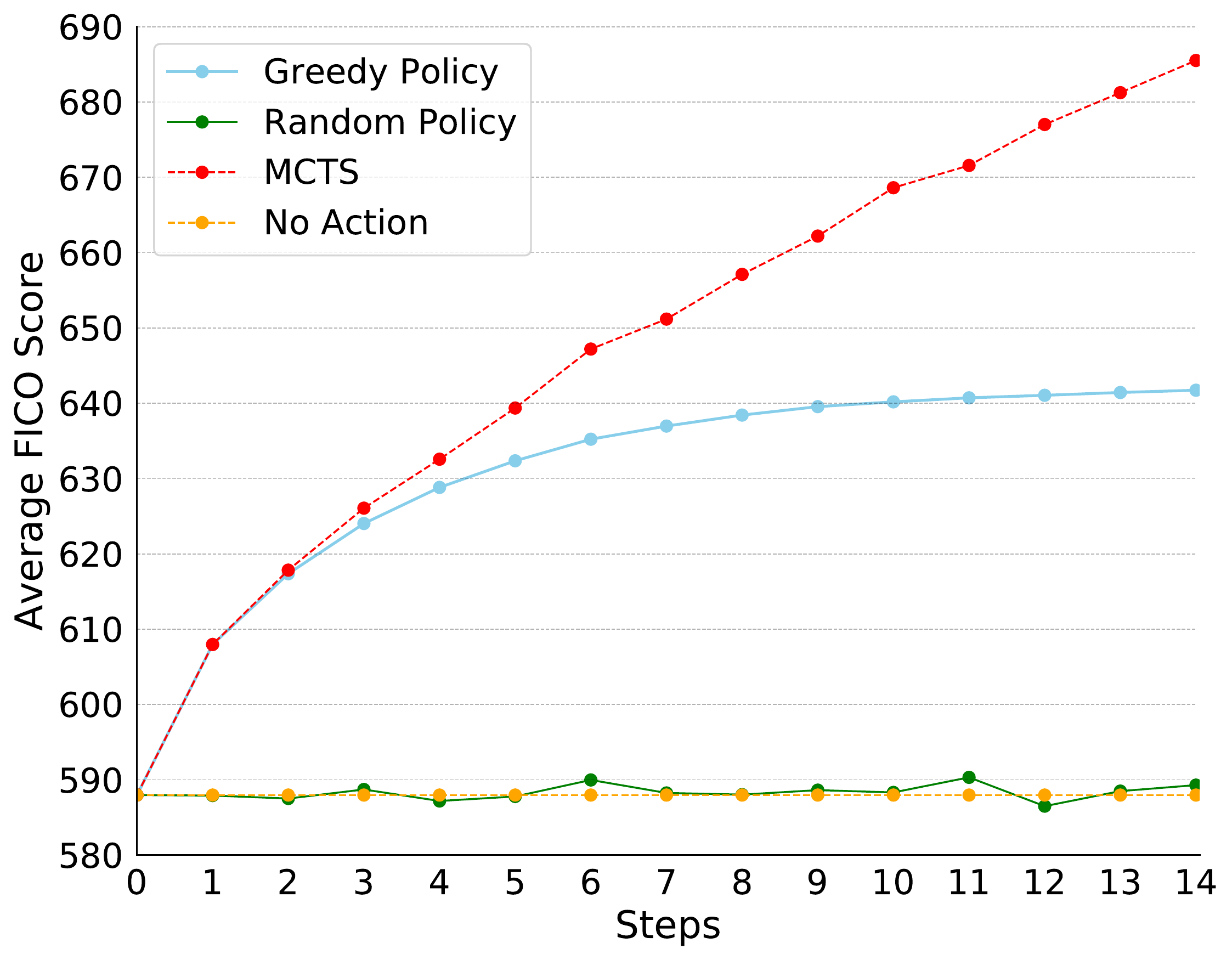}
\endminipage\hfill
\minipage{0.32\textwidth}
\includegraphics[width=\linewidth]{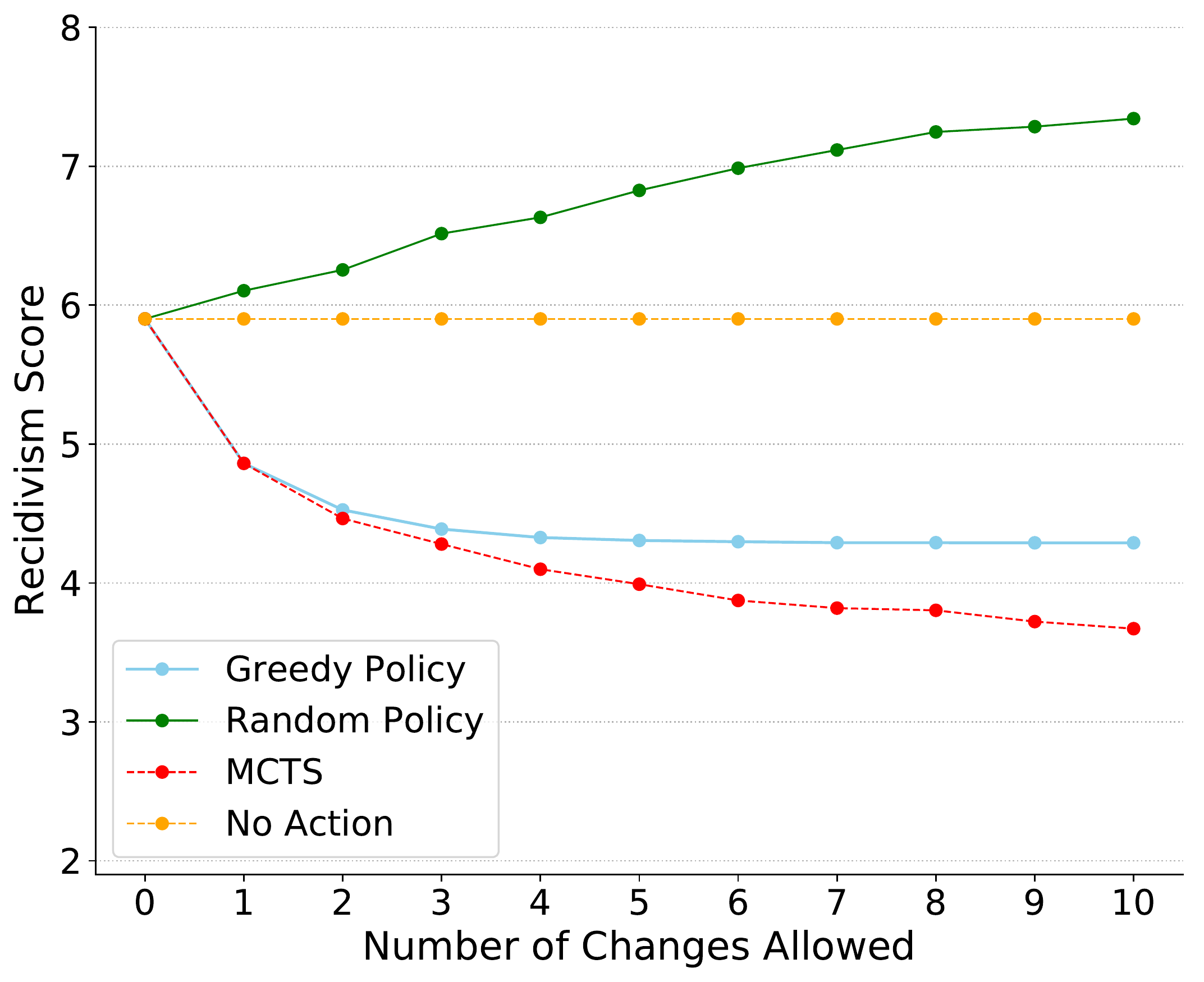}
\endminipage\hfill
\minipage{0.32\textwidth}
\includegraphics[width=\linewidth]{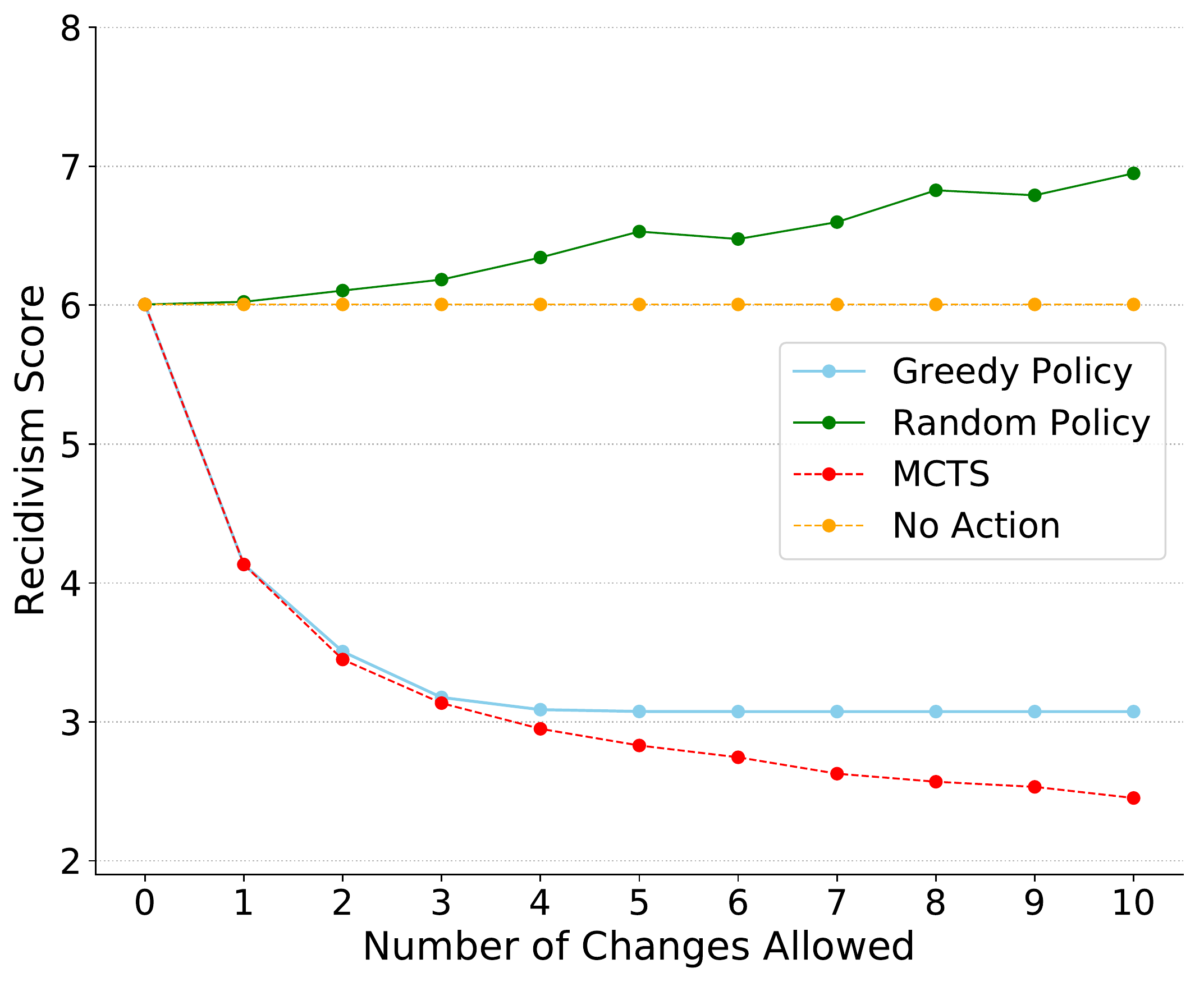}
\endminipage\hfill
    \caption{Comparing the performance of different advice policies, as defined in Eq. \ref{eq:incentivequality}, varying the initial resource count.
    \textbf{Left:} Simple Credit model (averaged over $1000$ initial states, higher is better). \textbf{Center:} recidivism prediction, including race and gender (averaged over $1000$ initially medium/high-risk states, lower is better). \textbf{Right:} Recivism prediction (excluding race/gender).}
    \label{fig:policycomparison}
\end{figure}
\section{Experiments}
\label{sec:experiments}
We applied our incentive-evaluation framework to two decision-settings: pretrial risk assessment, and credit scoring. In all cases, we used a discrete action space, and defined $\text{end}(s)$ as whether the agent has expended all their time/changes/resources. Note that 

\subheading{Incentive-Approximating Policies}

We compared different incentive-generating policies. To approximate the optimal agency-MDP policy, we used Monte-Carlo Tree Search (\textbf{MCTS}, \cite{browne2012survey}), with random rollouts and $0.5-1$s processing time. For the ``realistic'' credit model (with only 11 actions), we used an exhaustive search of all possible action sequences (\textbf{BFS}).
We also trained a double deep Q-network \cite{van2016deep} on the agency MDP, but found that in both settings the network generally failed to learn a meaningful advice policy (not even equaling the greedy policy), and so we have excluded those results.

We compare these incentives to the \textbf{greedy} policy (Eq. \ref{eq:greedy}), which maximizes the decision immediately after the current action, and to a \textbf{random} policy.

\subheading{Pretrial Risk Assessment Experiment}

In violent pretrial risk assessment, the criminal justice system outputs a score estimating the likelihood of an arrestee to be rearrested for a violent crime within two years, to determine whether to detain the individual pending trial. This offers an interesting case study of incentives, because in addition to accurately preventing violent crimes, the criminal justice system wants to incentivize decision-recipients to avoid criminal activity. We created a pretrial risk-assessment score using ProPublica's COMPAS dataset \cite{angwin_larson_kirchner_mattu_2019}, by training a random forest to use features about the arrestee and the charges from the current arrest to predict $2$-year violent recidivism, and then bucketing the subjects into score groups from 1 (lowest risk) to 10 (highest risk).
We trained two separate risk scores: one which was blind to the defendant's race and gender (immutable characteristics the subject could not affect), and one which had access to these characteristics. Both scoring algorithms could see a defendant's age - we found that excluding it made any resulting classifier highly inaccurate.

We defined the individual's state as their aforementioned decision features, plus a "number of remaining changes" they could make to their decision features. For a transition model, we allowed the individual, as a single action, to do one of: changing the type of crime (e.g. "drug", "violent", "theft"), incrementing or decrementing the degree of crime (e.g. "1st degree felony", "2nd degree misdemeanor"), or incrementing or decrementing the quantity of a certain type of previous interaction with the justice system (e.g. "\# of prior convictions").

As we can see in Figure \ref{fig:policycomparison}, the agency-MDP-solving policies matched or exceeded the greedy policy in providing agency to individuals across every setting.  Comparing the results between the two recidivism predictors, we see that, on average, excluding race and gender from the model increased individuals' ability to reduce their risk score. This is to be expected: as the decisions can no longer rely on these two immutable characteristics, they rely more heavily on attributes that individuals can hope to affect. 
In Figure \ref{fig:fairrecourse}, we further breakdown the impact of removing race and gender on individuals' agency. We see that the most substantial gains in agency occur among black men, with smaller gains among white men and black women. This provides an interesting counterpoint to conclusions from the fair machine learning literature \cite{dwork2012fairness}: while it may be true that algorithms blind to race or gender may still learn to discriminate based on correlates, blinding the algorithm to immutable characteristics may improve the agency of individuals from disadvantaged populations to change the decisions they receive. This presents an interesting direction for future work.

\begin{figure}
\minipage[t]{0.32\textwidth}
\includegraphics[width=\linewidth]{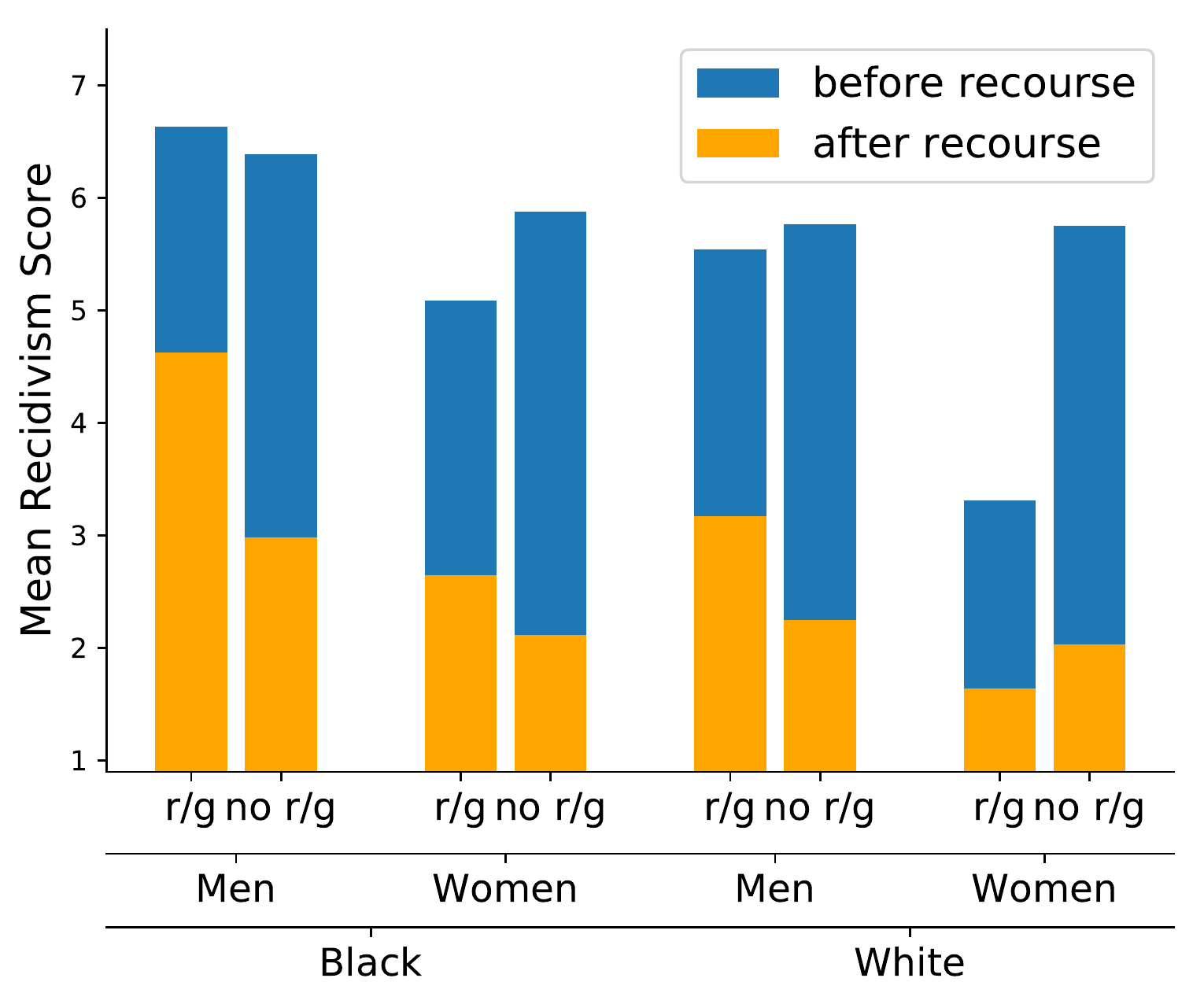}
\caption{Mean recidivism risk score 
before and after following MCTS-generated incentives for 6 to 10 steps, varying the inclusion of race/gender in decisions.}
    \label{fig:fairrecourse}
\endminipage\hfill
\minipage[t]{0.32\textwidth}
\includegraphics[width=\linewidth]{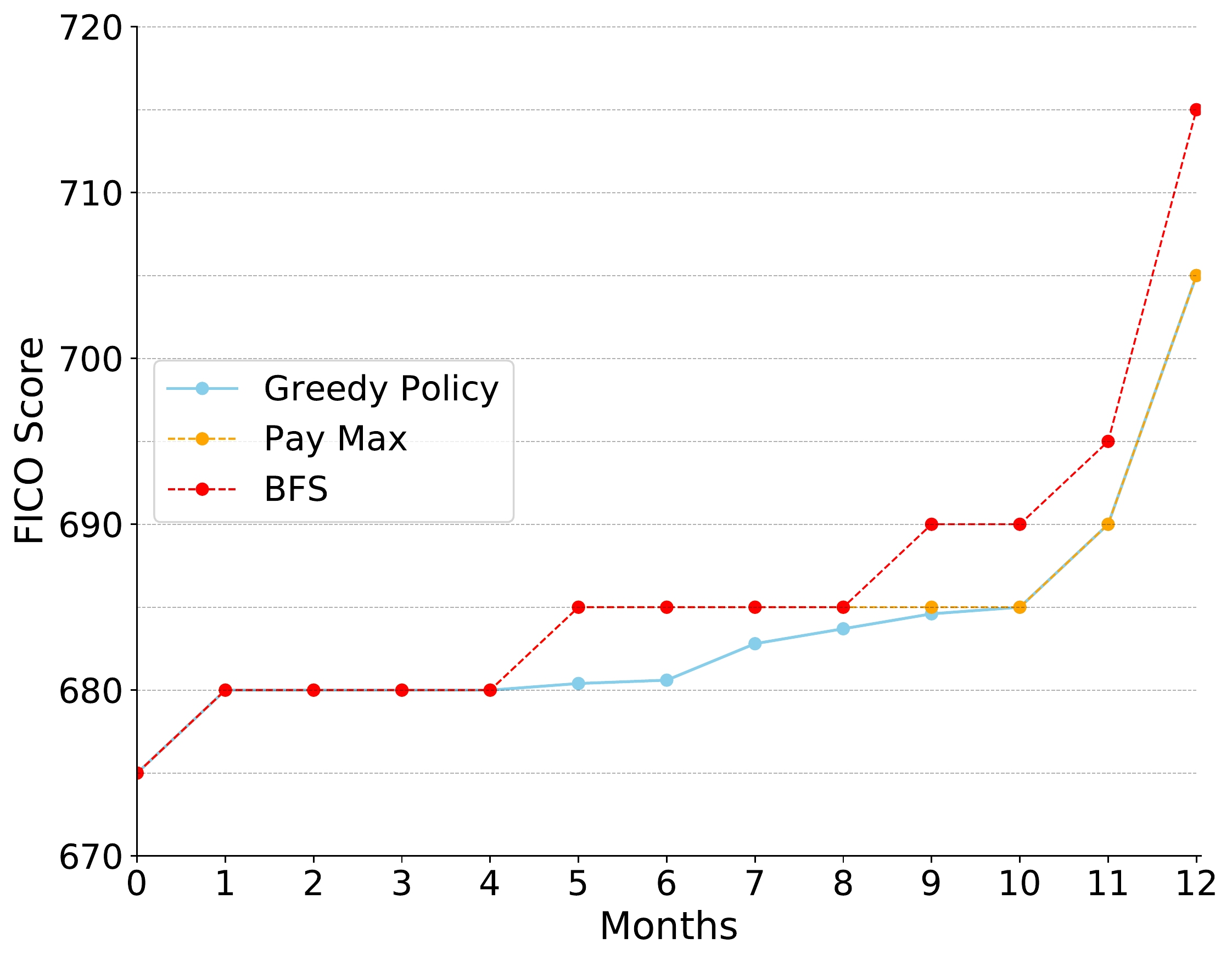}
\caption{Credit score under a realistic model, starting with US average financial data and no debt, and varying time remaining before score is checked.}
    \label{fig:averageamerican}
\endminipage\hfill
\minipage[t]{0.32\textwidth}
\includegraphics[width=\linewidth]{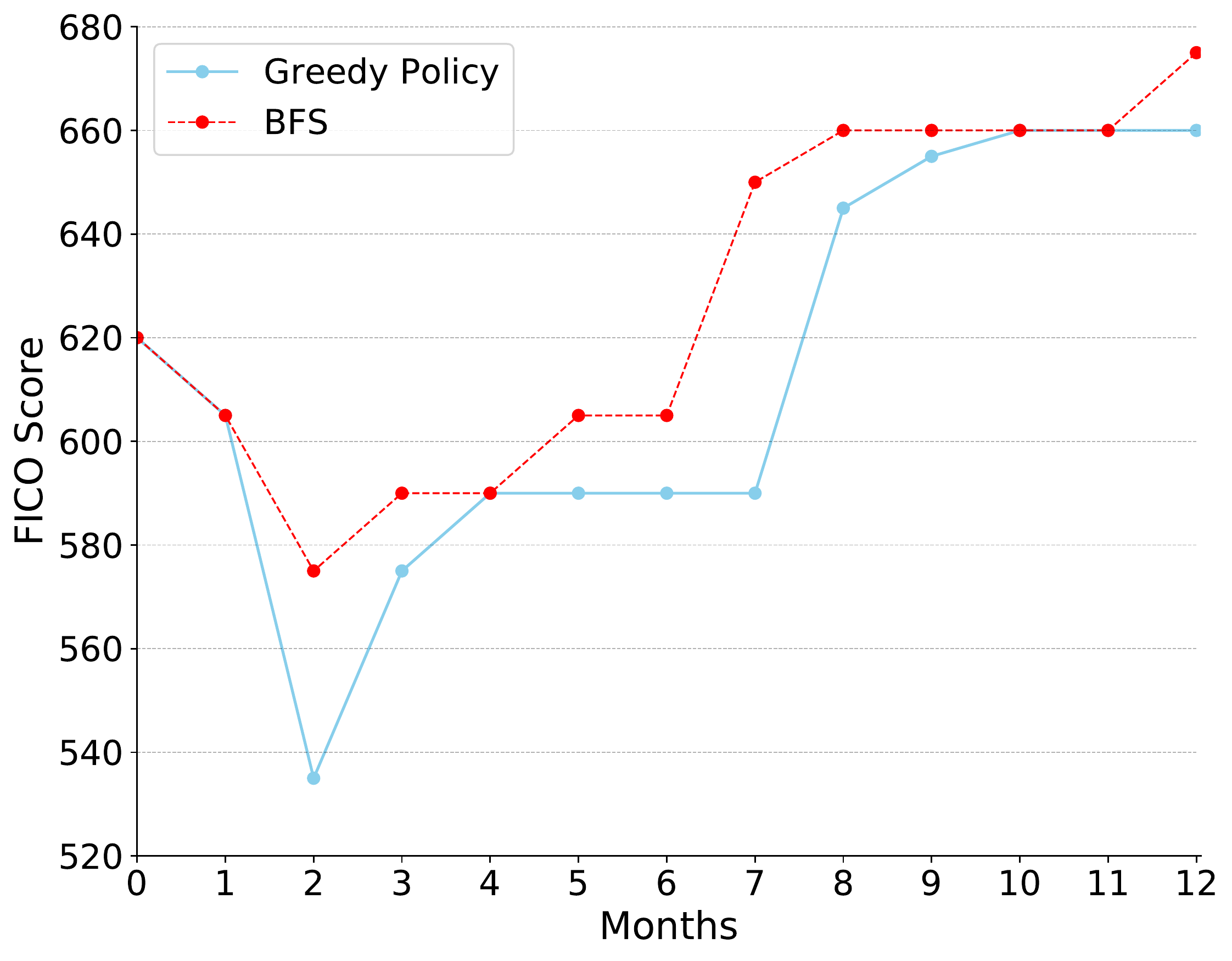}
\caption{Credit score under a realistic model, starting with US average financial data but a sudden crisis of \$10,000 of debt, and varying time remaining.}
    \label{fig:bankruptcy}
\endminipage\hfill
\end{figure}

\subheading{Credit Score Experiment}

For credit-scoring, we queried FICO's online credit-estimator \cite{myfico} at $~6\times10^6$ different points, and created a decision-function that output the same decision as the nearest queried point. FICO's estimator used answers to $10$ multiple-answer questions to asses a person's credit. Our "simple" model used these $10$ questions (plus the number of remaining actions) as the state, and allowed any question to be incremented or decremented as an action. We also created a ``realistic'' model (see Figure \ref{fico}), in which the state was comprised of grounded features such as `current cash on hand', `current debt',`how long each card has been open', `last missed payment', and `has declared bankruptcy'. The actions chosen for the transition model were meant to be concrete and achievable (e.g. ``open a credit card and pay the minimum on your cards this month'' rather than ``have less debt''). These actions had sequence-dependent effects like incurring monthly interest on unpaid loans, or declaring bankruptcy (an irreversible action which affected the credit score but simultaneously removed debt).

\begin{figure}
    \centering
\vspace*{-.5cm}
\includegraphics[width=0.5\textwidth, trim=0 60 0 0, clip]
{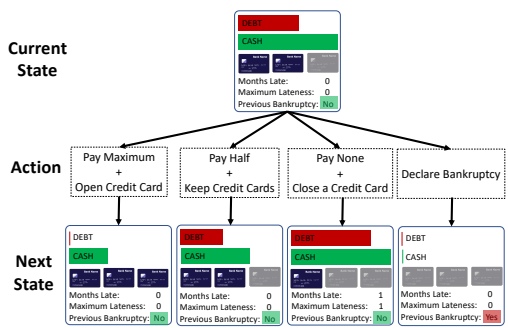}
\caption{Examples of actions an agent can take each month within the ``complex FICO'' model.}
\label{fico}
\vspace*{-.5cm}
\end{figure}

In Figures \ref{fig:averageamerican} and \ref{fig:bankruptcy}, we compared the performance of different advice policies for an average U.S. household (based on income, debt, credit limit, and interest rate) to try to increase their score \cite{frankel_2018,mccann_2019,josephson_josephson_2018}. As we describe below, the greedy policy consistently made short-sighted decisions, while the MDP-solving policy accepted short-term losses for larger long-term gains.

In Figure \ref{fig:averageamerican}, the individual starts off debt-free. The greedy policy always pays the full amount, occasionally opening up a single card in one of the first few months. However, as opening up a second card would decrease the short-term score, the greedy policy only opens a single card. For shorter time-spans, BFS acts similarly to the greedy policy, avoiding opening too many cards to prevent the decrease in credit. 
Given a longer period to act, however, BFS begins by opening up several cards (temporarily decreasing the score), which results in a higher overall score later on. We compared both policies to a ``pay max'' policy that always pays the full amount and doesn't open or close cards. This policy beats greedy on average over the timeframe as greedy doesn't open enough cards to fully reap the benefits of the increased credit limit, and sometimes suffers from the penalty of opening a card.

In Figure \ref{fig:bankruptcy}, the individual starts off with a sudden debt. The greedy policy starts by missing a payment (since the debt doesn't immediately impact the credit score by much), but the interest accrued from this large debt forces the agent to declare bankruptcy in month 2. Given only a month, BFS agrees with the greedy policy and simply misses a payment. However, given more than one month, BFS starts by declaring bankruptcy (knowing it'll happen eventually) so that it can begin rebuilding its credit as quickly as possible.

To highlight the importance of the choice of action-model, we also evaluated what the ``simple'' FICO model, with its simplistic actions, would recommend doing in these "realistic" credit scenarios. For the debt-free scenario, the "simple" BFS advice policy recommended "opening more cards" while keeping fixed the number of recent credit inquiries, and the date the most recent card was opened. While doing so would in fact improve one's credit score, this isn't useful advice, since opening a card without these additional consequences is impossible. For the debt-ridden scenario, the ``simple'' advice policy recommended having less credit card debt -- which, again, is correct but useless advice, since simply "having less debt" is not an executable action.


\section{Discussion}
\label{sec:discussion}

In this work, we have proposed a framework for reasoning about the incentives of individuals subject to black box decisions, by formulating their choices as an MDP. We demonstrated the framework's ability to analyze the agency of different groups subject to a recidivism predictor, and illustrated the framework's ability to extract superior advice policies for a customer subject to FICO's credit simulator.

Interestingly, this framework can be used for two different ends: comparing the utility of different advice-generating algorithms (e.g. local approximation, MDP-based tree search) on a fixed decision model, and comparing the incentives generated by different decision models while keeping the advice-generating algorithm fixed. We suggest that it may be most useful to choose an advice-generating scheme first (either based on MDP performance guarantees, or qualities like human-interpretability). Once a scheme has been chosen, we can reasonably compare the incentives/agency that different decision models provide to agents who we assume will follow that same advice scheme. 

Throughout this work we have assumed that a decision-making model does not change based on an individual's actions. However, when many individuals follow incentives at once, this is liable to shift the data distribution, and thus the model. The literature on strategic classification responds by attempting to remove agency from the individual. One interesting open question is whether we can incorporate the impact of subjects' actions on the decision-maker in such a way that the decision-maker will continue to provide agency to those subjects.


One primary ethical danger looms over the study of algorithmic incentives. What if focusing on how an individual could improve the decision they receive from an existing model will distract us from focusing on larger questions, namely: is the basis of the decision correct? And more importantly, does the decision-maker have a right to dictate our actions?

These are vital questions for practitioners to consider. We take solace in a simple truth:
a first step in the process of eliminating damaging incentives is having the tools to surface those incentives, so that they can be confronted and expunged.
\section{Acknowledgements}

The authors would like to thank Christina Ilvento for her early insights into the challenges of identifying incentives, and to Cynthia Dwork, Michael P. Kim, Suhas Vijaykumar, Boriana Gjura, and Charles Leiserson for their helpful feedback and advice.

William S. Moses was supported in part by a DOE Computational Sciences Graduate Fellowship DE-SC0019323. This research was supported in part by NSF Grant 1533644 and 1533644, in part by LANL grant 531711, and in part by IBM grant W1771646.

\vspace{1cm}
\newpage

\bibliography{bibliography.bib}
\vspace{1cm}
\newpage
\appendix
\section{Proofs}
\label{sec:proofs}

\newtheorem{atheorem}{Theorem}
\newtheorem{acorollary}{Corollary}[atheorem]
\setcounter{atheorem}{0}

As mentioned in Section \ref{sec:linear}, we define a locally-optimal \textit{greedy} policy as a policy that chooses actions based only on maximizing the immediate improvement in the received decision:
\begin{equation*}
\pi_{local}(s) = \argmax_{a \in \acspace} \E_{s' \sim \tran(s,a)} \left [\dec\left(I_x(s')\right)\right ]
\end{equation*}
When the function is differentiable, we can also define the \textit{gradient-following} policy for a state $s = [x, r]$ composed of only the decision features $x$ and the remaining available displacement $r$. Note that in this hypothetical transition model, we can manipulate every feature axis in $x$ independently.
\begin{equation*}
\pi_{gradient}(s) = \argmax_{a \in\{a' \in \RR^{dim(x)}~| ~\|a'\|_2 = \epsilon, ~\epsilon \ll 1\} } \dec(x + a) ~~= ~~ \epsilon \nabla_x\dec\left(x\right)
\end{equation*}

\begin{atheorem}
If there exists a policy that is optimal independent of the number of resources remaining, it must be equivalent to the greedy policy.
\end{atheorem}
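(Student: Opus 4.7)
My plan is to isolate the one-step case, observe that in that case optimality coincides with greediness by definition, and then transport this conclusion to all resource levels via the resource-independence hypothesis.

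\textbf{Step 1: unpack the hypothesis.} I will interpret ``optimal independent of the number of resources remaining'' as the statement that there is a policy $\pi^{\star}$ whose choice of action at a state $s = [x,r]$ depends only on the feature component $x$, and which is simultaneously optimal in the agency MDP for every value of $r$. Equivalently, $\pi^{\star}(\,[x,r_1]\,) = \pi^{\star}(\,[x,r_2]\,)$ for all admissible $r_1, r_2$, and for each fixed $r$ this common action maximizes the expected terminal reward from $[x,r]$.

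\textbf{Step 2: pin down the $r=1$ case.} I will consider the restriction of $\pi^{\star}$ to states of the form $[x,1]$. Because the reward function is $0$ at every non-terminal state and equals $\dec(I_x(s))$ at the terminal state, when only one resource remains the value of taking action $a$ at $[x,1]$ is exactly $\E_{s' \sim \tran([x,1],a)}[\dec(I_x(s'))]$, since the successor state is terminal. Optimality at $[x,1]$ therefore forces
\begin{equation*}
    \pi^{\star}(\,[x,1]\,) \;\in\; \argmax_{a \in \acspace} \E_{s' \sim \tran([x,1],a)}\!\left[\dec(I_x(s'))\right] \;=\; \pi_{\text{local}}(\,[x,1]\,).
\end{equation*}
Under the mild assumption that the transition model's effect on $x$ is determined by $x$ and $a$ (so that one can identify $\tran([x,1],a)$ with $\tran([x,r],a)$ at the level of the feature marginal), the right-hand side equals $\pi_{\text{local}}(\,[x,r]\,)$ for any $r$. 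I will flag this assumption explicitly; it is implicit throughout the paper's setup, where resources only govern termination.

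\textbf{Step 3: propagate to all resource levels.} Combining Steps 1 and 2, for any state $s = [x,r]$ with $r \geq 1$ I get
\begin{equation*}
    \pi^{\star}(\,[x,r]\,) \;=\; \pi^{\star}(\,[x,1]\,) \;=\; \pi_{\text{local}}(\,[x,r]\,),
\end{equation*}
which is exactly the conclusion that $\pi^{\star}$ is equivalent to the greedy policy.

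\textbf{Main obstacle.} The only nontrivial point is the resource-independence half of the hypothesis: I must make sure that ``optimal for every $r$'' combined with ``independent of $r$'' actually forces the policy to agree with the $r=1$ optimal choice, and that ties in the $\argmax$ do not create slack that would undermine the ``equivalent to'' claim. I will handle ties by reading ``equivalent'' as agreement up to selection from the $\argmax$ set, which is the natural reading since $\pi_{\text{local}}$ is itself only defined up to such ties.
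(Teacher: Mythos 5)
Your proposal is correct and follows essentially the same route as the paper: the paper's (very terse) proof likewise observes that a resource-independent optimal policy must in particular be optimal at the minimum resource level, where optimality coincides with greediness by definition of the reward, and then uses resource-independence to extend this to all states. Your version simply makes explicit the tie-handling and the (implicit) assumption that the transition's effect on the features does not depend on the resource count, both of which the paper glosses over.
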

\begin{proof}
Suppose our policy $\pi(s)$ does not consider the number of resources left. If $\pi(s)$ is optimal, it must be optimal given the minimum number of resources for there to exist a valid action (in the case of a continuous number of resources we may use the limit of $\text{resources} \rightarrow 0$).
\end{proof}

\begin{acorollary}
Only if a decision function $\dec$ satisfies the constraints required for greedy ascent to reach a global maximum from any point can the greedy policy be an optimal advice policy for $\dec$. This means all local maxima must also be global maxima for a decision function to have an optimial policy independent of resources.

\end{acorollary}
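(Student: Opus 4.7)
The plan is to derive the corollary as a direct consequence of Theorem~1 together with the observation that any optimal advice policy, given enough resources, must drive the subject to a global maximum of $\dec$. Under that combination, the greedy policy of Eq.~\ref{eq:greedy} plays the role of ``greedy ascent'' in the MDP, so requiring it to be optimal for every resource budget is equivalent to requiring greedy ascent on $\dec$ to succeed from every starting point.

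First I would reduce the ``only if'' direction to a purely function-analytic statement. Suppose, for contradiction, that $\dec$ has a local maximum $x^\star$ that is not a global maximum, and let $\pi$ be any policy that is optimal independent of resources. By Theorem~1, $\pi$ must coincide with the greedy policy $\pi_{local}$. At the state $s^\star$ with decision features $x^\star$, every admissible action $a$ satisfies $\E_{s'\sim\tran(s^\star,a)}[\dec(I_x(s'))]\le \dec(x^\star)$ by definition of a local maximum, so $\pi_{local}(s^\star)$ selects an action whose expected one-step decision is $\dec(x^\star)$ (either staying put or moving to an equally-valued neighbour). Rolling out $\pi_{local}$ from $s^\star$ therefore yields a final decision of at most $\dec(x^\star)$.

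Next I would exhibit a resource budget for which this is strictly sub-optimal. Because $x^\star$ is not a global maximum, there exists $x^{**}$ with $\dec(x^{**})>\dec(x^\star)$. Since the agency MDP is assumed to have actions reaching $x^{**}$ in some finite number of steps (this is the content of ``the constraints required for greedy ascent to reach a global maximum''), there is a resource count $r$ and an alternative policy $\pi'$ whose rollout from $s^\star$ terminates at a state with features $x^{**}$. Then $\pi'$ achieves expected reward $\dec(x^{**})>\dec(x^\star)\ge \E_{s_f\sim H_{\pi_{local}}(s^\star)}[\dec(I_x(s_f))]$, contradicting the optimality of $\pi_{local}=\pi$ at resource level $r$. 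Thus every local maximum of $\dec$ must also be a global maximum, which is precisely the condition that greedy ascent reach a global maximum from any point.

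The main obstacle I anticipate is being careful about the notion of ``greedy ascent'' in the MDP setting, since Theorem~1 is stated for arbitrary discrete action sets whereas the phrasing of the corollary blends this with the continuous gradient-ascent intuition of Eq.~\ref{eq:gradient}. I would handle this by stating explicitly that ``greedy ascent reaches a global maximum from any point'' means the greedy policy's rollout terminates at a global maximum for every starting state and sufficient resource budget; the rest of the argument then goes through for both the discrete (greedy) and continuous (gradient-following) instantiations. A secondary subtlety is the tie-breaking in $\argmax$ at a local maximum: I would note that the argument only needs the existence of a strictly better reachable state, so the specific greedy tie-breaking rule is immaterial.
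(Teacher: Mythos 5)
Your argument is correct and follows exactly the route the paper intends: the paper states this corollary without an explicit proof, treating it as an immediate consequence of Theorem 1 (any resource-independent optimal policy must be greedy, and with a large enough resource budget an optimal policy must reach a global maximum, so greedy ascent must do so from every starting point). Your write-up simply makes that implicit chain of reasoning explicit, including the contradiction at a non-global local maximum, so it matches the paper's approach.
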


\begin{atheorem}
For a continuous action space in $L_2$, a greedy policy is optimal only if the gradient field of $\dec$ consists of straight lines wherever $\del \dec \ne 0$.
\end{atheorem}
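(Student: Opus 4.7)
The plan is to establish the contrapositive: whenever the gradient field of $\dec$ has an integral curve that bends at some point $x_0$ with $\del \dec(x_0) \neq 0$, the greedy policy is strictly suboptimal for some resource budget starting from $x_0$. First I would recast the continuous-action, $L_2$-bounded agency MDP as a control problem: executing a policy for $r$ resources corresponds to selecting a trajectory $\gamma : [0, r] \to \featspace$ with $\gamma(0) = x_0$ and $\|\dot\gamma(t)\| \le 1$, with objective $\dec(\gamma(r))$. Under this parameterization the greedy (gradient-following) policy of Equation \ref{eq:gradient} traces out the integral curve $\dot\gamma_g(t) = \del \dec(\gamma_g(t))/\|\del \dec(\gamma_g(t))\|$ of the normalized gradient field.

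The next step is to apply Pontryagin's Maximum Principle to the control problem. With Hamiltonian $H(x, u, p) = p \cdot u$, pointwise maximization over $\|u\| \le 1$ gives $u^* = p/\|p\|$; the costate equation reads $\dot p = -\partial H/\partial x = 0$, so $p$ is constant along any optimal trajectory; and the transversality condition forces $p(r) = \del \dec(\gamma^*(r))$. Hence the optimal control is constant in time, and every optimal trajectory is a \emph{straight line} whose direction is the gradient at its own endpoint.

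Combining the two characterizations, for the greedy trajectory from $x_0$ to be optimal, the integral curve of $\del \dec$ through $x_0$ must coincide with a straight line. Since the hypothesis that greedy is optimal must hold for every starting point, the integral curves of $\del \dec$ are straight lines wherever $\del \dec \neq 0$. Conversely, if the field bends at $x_0$, then for a sufficiently large $r$ the greedy trajectory deviates from any straight line; the Pontryagin-optimal straight-line trajectory through $x_0$ then yields a strictly greater value of $\dec(\gamma(r))$, witnessing the failure of greedy optimality.

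The main obstacle I expect is bridging the discrete-action greedy definition in Equation \ref{eq:greedy} with the continuous-time control problem, since Pontryagin's principle reasons about infinitesimal paths while the MDP acts in discrete steps. I would close this gap by using the $\epsilon \to 0$ limit embodied in Equation \ref{eq:gradient}, so that the greedy MDP policy tends to the gradient flow, and arguing that any strict gap present in the limit persists for all sufficiently small step sizes by continuity of $\dec$. A subsidiary concern is ensuring enough regularity on $\dec$ (continuous differentiability, nonvanishing gradient in a neighborhood of the candidate trajectory) for Pontryagin's necessary conditions to apply and for the integral curves to be well-defined; these hypotheses follow from the differentiability already implicit in the definition of the gradient-following policy.
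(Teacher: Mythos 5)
Your proof is correct in substance and shares the same skeleton as the paper's: identify the greedy policy with the gradient flow, show that every optimal trajectory under the $L_2$ resource constraint is a straight line, and conclude that greedy optimality forces the integral curves of $\del \dec$ to be straight wherever $\del \dec \ne 0$. The difference lies entirely in how straightness of optimal trajectories is established. The paper does it elementarily, with a case split: if $\dec$ has a global maximum, the nearest one $x^*$ is reachable from $x_0$ with exactly $|x^*-x_0|$ resources only via the straight segment; if not, Corollary \ref{localAreGlobal} rules out interior local maxima, so the best reachable point lies on the boundary of the reachable ball and again only a straight path attains it. You instead invoke Pontryagin's Maximum Principle, getting a constant costate and hence a constant control in one stroke; this avoids the case split and additionally identifies the optimal direction as the gradient at the terminal point. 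The trade-off is that PMP demands more regularity and degenerates precisely when the optimal endpoint is a critical point of $\dec$ (the transversality condition gives $p \equiv 0$ and the Hamiltonian maximization becomes vacuous) --- which is exactly the global-maximum case the paper handles head-on. You can sidestep this by taking $r$ small enough that the reachable ball contains no critical points, which suffices for the local straightness claim, but you should state that step explicitly rather than folding it into a general regularity caveat. Both arguments lean on the same unproved identification of the discrete greedy policy with the continuous gradient flow in the $\epsilon \to 0$ limit; you at least flag this bridging issue, which the paper asserts in a single sentence.
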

\begin{proof}
For a continuous action space in $L_2$, a greedy policy is equivalent to the gradient-following policy.

Consider a point $x_0$ where $\del \dec(x_0) \ne 0$. Thus $\dec(x_0)$ is not a global maximum since we can follow $\del \dec(x_0)$ to a more optimal point. 

Suppose $\dec$ has a global maximum.  Let $x^*$ be the closest point to $x_0$ where $\dec(x^*)$ is a global maximum. The optimal policy for $|x^* - x_0|$ resources starting at $x_0$ is a straight line from $x_0$ to $x^*$, since any other policy would not reach a global maximum. 

Suppose $\dec$ has no global maximum. By Corollary \ref{localAreGlobal}, $\dec$ has no local maxima. Let $r$ be an arbitrary resource quantity. Let $x^* = \argmax_{x, |x-x_0| \le r} \dec(x)$, the best point reachable from $x$ with $s$ resources. Since $\dec$ has no local maxima, $x^*$ must be on the boundary of all reachable destinations from $x_0$ (assuming $\dec$ is not constant). Therefore the optimal policy for $r$ resources starting at $x_0$ is a straight line from $x_0$ to $x^*$.

Since the optimal policy follows a straight line wherever $\del \dec \ne 0$, if gradient-following is an optimal policy, then the gradient field of $\dec$ must be straight lines where $\del \dec \ne 0$.
\end{proof}

\begin{acorollary}
For a continuous action space in L2, if $\del \dec \ne 0$ everywhere, then the gradient-following policy is optimal if and only if all gradients point in the same direction $\vec c$ and have equal magnitude in the plane tangent to $\vec c$.
\end{acorollary}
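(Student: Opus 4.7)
The plan is to prove both directions of the equivalence, using the appendix's Theorem~2 (optimality of gradient-following forces the integral curves of $\del\dec$ to be straight lines wherever $\del\dec\ne 0$).

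For the backward direction, I would assume $\del\dec$ points everywhere in a common direction $\vec c$ with $\|\del\dec\|$ constant within every hyperplane perpendicular to $\vec c$. Writing $\del\dec = g(x)\vec c$ and imposing the curl-free identity $\partial_i(g c_j) = \partial_j(g c_i)$ forces $\del g \parallel \vec c$, so $g$ depends only on $x\cdot\vec c$ and $\dec(x) = F(x\cdot\vec c)$ for some strictly increasing $F$. For any start $x_0$ and budget $r$, maximizing $\dec$ on the closed $L_2$-ball around $x_0$ reduces to maximizing $x\cdot\vec c$ subject to $\|x-x_0\|\le r$, whose unique solution is $x_0 + r\vec c$; this is precisely the endpoint of gradient-following, since the step direction is always $\vec c$.

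For the forward direction, I would assume gradient-following is optimal and invoke the appendix's Theorem~2 to conclude every integral curve of $\del\dec$ is a full straight line in $\RR^n$. The level sets $\Sigma_c = \dec^{-1}(c)$ are smooth hypersurfaces (by the implicit function theorem, using $\del\dec\ne 0$), and the straight integral curves meet each $\Sigma_c$ perpendicularly. The crux is to upgrade this to: each $\Sigma_c$ is a hyperplane. I would argue via the normal exponential map $\Phi:\Sigma_c\times\RR\to\RR^n$, $\Phi(y,t) = y + t\,\vec c(y)$, whose tangential differential is $I - tS$ with $S$ the shape operator of $\Sigma_c$. Because any two distinct integral curves must be disjoint (otherwise $\del\dec$ would point in two different directions at a common point, contradicting $\del\dec\ne 0$), $\Phi$ is globally injective in $t\in\RR$, forcing $S$ to have no nonzero eigenvalues anywhere. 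A connected, complete hypersurface in $\RR^n$ with identically vanishing shape operator is a hyperplane, so $\vec c$ is constant on $\Sigma_c$; combined with $\vec c$ being constant along each straight integral curve, $\vec c$ is globally constant. The equal-magnitude statement then follows exactly as in the backward direction, from the same curl-free identity.

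The main obstacle is the global geometric step: showing that a foliation of $\RR^n$ by parallel hypersurfaces with straight, non-intersecting normal lines can only consist of hyperplanes. In $\RR^2$ this is immediate because any two non-parallel straight lines intersect, but in $\RR^n$ for $n\ge 3$ skew configurations are a priori possible, so one really needs either the shape-operator/focal-point argument sketched above or an equivalent Frobenius-integrability argument for the distribution $\vec c^{\perp}$ to rule them out.
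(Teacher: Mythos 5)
Your proposal follows the same skeleton as the paper's proof (invoke Theorem 2 to get straight gradient lines, argue all such lines are parallel to a common $\vec c$, then verify optimality), but it is substantially more careful at exactly the points where the paper's argument is thin, and in one place it repairs a genuine gap. The paper's key step asserts that two non-parallel gradient lines ``must eventually meet at a point $y$,'' which is only true in $\RR^2$; in dimension $n\ge 3$ non-parallel lines can be skew, and indeed $\RR^3$ admits fibrations by pairwise-disjoint non-parallel lines, so something extra is needed. You correctly identify this and supply it: the gradient lines are normal to the level hypersurfaces $\dec^{-1}(c)$ (which exist by the implicit function theorem since $\del\dec\ne 0$), and disjointness of the normal lines forces the shape operator to vanish via the focal-point/normal-exponential-map argument, making each level set a hyperplane; non-parallel hyperplanes (unlike lines) always intersect, which closes the argument. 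You also make rigorous two steps the paper explicitly waves off: the equal-magnitude claim, which the paper ``omits for brevity,'' falls out of your curl-free identity $\partial_i(gc_j)=\partial_j(gc_i)$ (this needs $\dec\in C^2$, worth stating, and it shows the equal-magnitude hypothesis is actually implied by the common-direction hypothesis); and the final optimality claim, which your representation $\dec(x)=F(x\cdot\vec c)$ with $F$ strictly increasing reduces to maximizing $x\cdot\vec c$ over an $L_2$ ball. In short, your route is the same in outline but is the correct proof in $\RR^n$, whereas the paper's as written is only valid in the plane.
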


\begin{proof}
Since $\del \dec \ne 0$, Theorem \ref{Linear} states that an agent following the gradient starting at $x_0$ will draw a ray to infinity. This ray can be extended backwards to form a line, such that following the gradient starting at any point on this line will keep the agent on the line, moving in one direction. This is also true for an arbitrary second point $x_1$. These two lines must be parallel.

If these two lines were not parallel, they must eventually meet at a point $y$. Therefore at $y$ the gradient must point in two directions at once. Since $\del \dec \ne 0$ this is not possible, generating a contradiction.

Therefore all gradient lines of $\dec$ must be parallel, pointing in some direction $\vec c$.

Moreover, consider the plane tangent to $\vec c$ at an arbitrary $x$. Gradients at all points on this plane must not only point in the same direction, but have the same magnitude.  A full proof is omitted for brevity, but intuitively if this were not the case, it would be optimal for an agent at a point with smaller gradient to move slightly in the direction of a point in the same plane with a larger gradient, creating a contradiction. Additionally, the value of the decision function must be equal at all points in this plane (otherwise the gradient would not be tangent to the plane).

Last, we prove gradient-following is optimal. 
As the decision function is equal at all values in planes tangent to $\vec c$, it sufficies to show gradient-following picks the maximum value along $\vec c$. Since the gradient points in the direction of $\vec c$ and is nonzero, gradient following is optimal.%
%
\end{proof}

\end{document}